\let\citep\cite
\def\E{\mathbb{E}}
\def \R{\mathbb{R}}
\newcommand\indep{\protect\mathpalette{\protect\independenT}{\perp}}
\def\independenT#1#2{\mathrel{\rlap{$#1#2$}\mkern2mu{#1#2}}}
\DeclareMathOperator*{\argmin}{arg\,min}
\xpatchcmd{\paragraph}{3.25ex \@plus1ex \@minus.2ex}{3pt plus 1pt minus 1pt}{\typeout{success!}}{\typeout{failure!}}
\newcommand{\Prb}[1]{\mathbb{P}\left(#1\right)}
\newcommand{\mb}{\mathbf}
\newcommand{\magd}[1]{\left\|#1\right\|}
\newcommand{\fmagd}[1]{\|#1\|}
\newcommand{\op}[1]{\operatorname{#1}}
\newcommand{\fbracks}[1]{[#1]}
\newcommand{\Efb}[1]{\E\fbracks{{#1}}}
\newcommand{\Eb}[1]{\E\bracks{{#1}}}
\newcommand{\Rl}{\mathbb{R}}
\newcommand{\prns}[1]{\left(#1\right)}
\newcommand{\bracks}[1]{\left[#1\right]}
\newcommand{\rct}{^\text{Unc}}
\newcommand{\obs}{^\text{Conf}}
\newcommand{\Yc}{Y^{GT}_i}
\theoremstyle{plain}
\newtheorem{theorem}{Theorem}
\newtheorem{lemma}{Lemma}
\theoremstyle{definition}
\newtheorem{assumption}{Assumption}
\newtheorem{definition}{Definition}
\newtheorem*{example-un}{Example}
\title{Removing Hidden Confounding by\\Experimental Grounding}
\author{
Nathan Kallus\\
Cornell University and Cornell Tech\\
New York, NY\\
\texttt{kallus@cornell.edu}\\
\And
Aahlad Manas Puli\\
New York University\\
New York, NY\\
\texttt{apm470@nyu.edu}\\
\And
Uri Shalit\\
Technion\\
Haifa, Israel\\
\texttt{urishalit@technion.ac.il}\\
}
\begin{document}

\maketitle

\begin{abstract}
Observational data is increasingly used as a means for making individual-level causal predictions and intervention recommendations. The foremost challenge of causal inference from observational data is hidden confounding, whose presence cannot be tested in data and can invalidate any causal conclusion. Experimental data does not suffer from confounding but is usually limited in both scope and scale. We introduce a novel method of using limited experimental data to correct the hidden confounding in causal effect models trained on larger observational data, even if the observational data does not fully overlap with the experimental data. Our method makes strictly weaker assumptions than existing approaches, and we prove conditions under which it yields a consistent estimator. We demonstrate our method's efficacy using real-world data from a large educational experiment. 
\end{abstract}

\section{Introduction}

In domains such as healthcare, education, and marketing there is growing interest in using observational data to draw causal conclusions about individual-level effects; for example, using electronic healthcare records to determine which patients should get what treatments, using school records to optimize educational policy interventions, or using past advertising campaign data to refine targeting and maximize lift. Observational datasets, due to their often very large number of samples and exhaustive scope (many measured covariates) in comparison to experimental datasets, offer a unique opportunity to uncover fine-grained effects that may apply to many target populations. 

However, a significant obstacle when attempting to draw causal conclusions from observational data is the problem of \emph{hidden confounders}: factors that affect both treatment assignment and outcome, but are unmeasured in the observational data. Example cases where hidden confounders arise include physicians prescribing medication based on indicators not present in the health record, or classes being assigned a teacher's aide because of special efforts by a competent school principal. Hidden confounding can lead to no-vanishing bias in causal estimates even in the limit of infinite samples \citep{pearl2009causality}.

In an observational study, one can never prove that there is no hidden confounding \citep{pearl2009causality}. However, a possible fix can be found if there exists a Randomized Controlled Trial (RCT) testing the effect of the intervention in question. For example, if a Health Management Organization (HMO) is considering the effect of a medication on its patient population, it might look at an RCT which tested this medication. The problem with using RCTs is that often their participants do not fully reflect the target population. As an example, an HMO in California might have to use an RCT from Switzerland, conducted perhaps several years ago, on a much smaller population. The problem of generalizing conclusions from an RCT to a different target population is known as the problem of external validity \citep{rothwell2005external,andrews2017weighting}, or more specifically, transportability \citep{bareinboim2013general,bareinboim2014external}. 

In this paper, we are interested in the case where fine-grained causal inference is sought, in the form of Conditional Average Treatment Effects (CATE), where we consider a large set of covariates, enough to identify each unit. We aim at using a large observational sample and a possibly much smaller experimental sample. The typical use case we have in mind is of a user who wishes to estimate CATE and has a relatively large observational sample that covers their population of interest. This observational sample might suffer from hidden confounding, as all observational data will to some extent, but they also have a smaller sample from an experiment, albeit one that might not directly reflect their population of interest. For example, consider  The Women's Health Initiative \citep{rossouw2002writing} where there was a big previous observational study and a smaller RCT to study hormone replacement therapy. The studies ended up with opposite results and there is intense discussion about confounding and external validity: the RCT was limited due to covering a fundamentally different (healthier and younger) population compared with the observational study \citep{hernan2008observational,vandenbroucke2009hrt}.

Differently from previous work on estimating CATE from observational data,
our approach \emph{does not} assume that all confounders have been measured, and we only assume that the support of the experimental study has some overlap with the support of the observational study.
The major assumption we do make is that we can learn the \emph{structure} of the hidden confounding by comparing the observational and experimental samples. Specifically, rather than assuming that effects themselves have a parametric structure -- a questionable assumption that is bound to lead to dangerous extrapolation from small experiments -- we only assume that this hidden confounding function has a parametric structure that we can extrapolate. Thus we limit ourselves to a parametric correction of a possibly complex effect function learned on the observational data. 
We discuss why this assumption is possibly reasonable.
Specifically, as long as the parametric family includes the zero function, this assumption is strictly weaker than assuming that all confounders in the observational study have been observed. One way to view our approach is that we bring together an unbiased but high-variance estimator from the RCT (possibly infinite-variance when the RCT has zero overlap with the target population) and a biased but low-variance estimator from the observational study. This achieves a consistent (vanishing bias \emph{and} variance) CATE estimator. 
Finally, we run experiments on both simulation and real-world data and show our method outperforms the standard approaches to this problem. In particular, we use data from a large-scale RCT measuring the effect of small classrooms and teacher's aids \cite{word1990state,krueger1999experimental} to obtain ground-truth estimates of causal effects, which we then try and reproduce from a confounded observational study.

\section{Setup}

We focus on studying a binary treatment, which we interpret as the presence or absence of an intervention of interest. To study its fine-grained effects on individuals, we consider having treatment-outcome data from two sources: an observational study that may be subject to hidden confounding, and an unconfounded study, typically coming from an experiment. The observational data consists of 
baseline covariates $X_i\obs\in\R^d$, 
assigned treatments $T_i\obs\in\{0,1\}$, and 
observed outcomes $Y_i\obs\in\Rl$
for $i=1,\dots,n\obs$.
Similarly, the unconfounded data consists of $X_i\rct,T_i\rct,Y_i\rct$
for $i=1,\dots,n\rct$. 

Conceptually, we focus on the setting where (1) the observational data is of much larger scale $n\rct\ll n\obs$ and/or (2) the support of the unconfounded data $\op{Support}(X_i\rct)=\{x:\Prb{\|X_i\rct-x\|\leq\delta}>0\;\forall \delta>0\}$, does not include the population about which we want to make causal conclusions and targeted interventions. This means that the observational data has both the scale and the scope we want but the presence of confounding limits the study of causal effects, while the unconfounded experimental data has unconfoundedness but does not have the scale and/or scope necessary to study the individual-level effects of interest. 

The unconfounded data usually comes from an RCT that was conducted on a smaller scale on a different population, as presented in the previous section. Alternatively, and equivalently for our formalism, it can arise from recognizing a latent unconfounded sub-experiment within the observational study. For example, we may have information from the data generation process that indicates that treatment for certain units was actually assigned purely as a (possibly stochastic) function of the observed covariates $x$. Two examples of this would be when certain prognoses dictate a strict rule-based treatment assignment or in situations of known equipoise after a certain prognosis, where there is no evidence guiding treatment one way or the other and its assignment is as if at random based on the individual who ends up administering it. Regardless if the unconfounded data came from a secondary RCT (more common) or from within the observational dataset, our mathematical set up remains the same.

Formally, we consider each dataset to be iid draws from two different super-populations, indicated by the event $E$ taking either the value $E\obs$ or $E\rct$. The observational data are iid draws from the population given by conditioning on the event $E\obs$: $X_i\obs,T_i\obs,Y_i\obs\sim (X,T,Y\mid E\obs)$ iid. Similarly, $X_i\rct,T_i\rct,Y_i\rct\sim (X,T,Y\mid E\rct)$. Using potential outcome notation, assuming the standard Stable Unit Treatment Value Assumption (SUTVA), which posits no interference and consistency between observed and potential outcomes, we let $Y(0),Y(1)$ be the potential outcomes of administering each of the two treatments and $Y=Y(T)=TY(1) + (1-T)Y(0)$. 
The quantity we are interested in is the Conditional Average Treatment Effect (CATE):
\begin{definition}[CATE]
Let $\tau(x) = \E\left[Y(1)-Y(0)|X=x\right] $.
\end{definition}
The key assumption we make about the unconfounded data is its unconfoundedness:
\begin{assumption}\label{asmp:unc}[Unconfounded experiment]
\begin{align*}
& (i) ~~ Y(0),Y(1)\indep T\mid X,\,E\rct \\
& (ii) ~~ Y(0),Y(1)\indep E\rct\mid X.
\end{align*}
\end{assumption}
This assumption holds if the unconfounded data was generated in a randomized control trial. More generally, it is functionally equivalent to assuming that the unconfounded data was generated by running a logging policy on a contextual bandit, that is, first covariates are drawn from the unconfounded population $X\mid E\rct$ and revealed, then a treatment $T$ is chosen, the outcomes are drawn based on the covariates $Y(0),Y(1)\mid X$, but only the outcome corresponding to the chosen treatment $Y=Y(T)$ is revealed. The second part of the assumption means that merely being in the unconfounded study does not affect the potential outcomes conditioned on the covariates $X$. It implies that the functional relationship between the unobserved confounders and the potential outcomes is the same in both studies. This will fail if for example knowing you are part of a study causes you to react differently to the same treatment. We note that this assumption is strictly weaker than the standard ignorability assumption in observational studies.
This assumption implies that for covariates \emph{within the domain of the experiment}, we can identify the value of CATE using regression. Specifically, if $x\in\op{Support}(X\mid E\rct)$, that is, if $\Prb{\|X-x\|\leq\delta\mid E\rct}>0\;\forall \delta>0$, then
$
\tau(x)=\Eb{Y\mid T=1,X=x,E\rct}-\Eb{Y\mid T=0,X=x,E\rct}
$,
where $\Eb{Y\mid T=t,X=x,E\rct}$ can be identified by regressing observed outcomes on treatment and covariates in the unconfounded data.
However, this identification of CATE is (i) limited to the restricted domain of the experiment and (ii) hindered by the limited amount of data available in the unconfounded sample. The hope is to overcome these obstacles using the observational data.

Importantly, however, the unconfoundedness assumption is \emph{not} assumed to hold for the observational data, which may be subject to unmeasured confounding. That is, \emph{both} selection into the observational study \emph{and} the selection of the treatment may be confounded with the potential outcomes of any one treatment. Let us denote the difference in conditional average outcomes in the observational data by
$$
\omega(x)=\Eb{Y\mid T=1,X=x,E\obs}-\Eb{Y\mid T=0,X=x,E\obs}.
$$
Note that due to confounding factors, $\omega(x)\neq\tau(x)$ for \emph{any} $x$, whether in the support of the observational study or not. The difference between these two quantities is precisely the confounding effect, which we denote
$$
\eta(x)=\tau(x)-\omega(x).
$$
Another way to express this term is: $$\eta(x) = \{\Eb{Y(1)|x} - \Eb{Y(1)|x,T=1}\} - \{\Eb{Y(0)|x} -\Eb{Y(0)|x,T=0}\}.$$
Note that if the observational study were unconfounded then we would have $\eta(x)=0$. Further note that a standard assumption in the vast majority of methodological literature makes the assumption that $\eta(x)\equiv 0$, even though it is widely acknowledged that this assumption isn't realistic, and is at best an approximation.

\begin{example-un}
In order to better understand the function $\eta(x)$, consider the following case: Assume there are two equally likely types of patients,``dutiful'' and ``negligent''. Dutiful patients take care of their general health and are more likely to seek treatment, while negligent patients do not. Assume $T=1$ is a medical treatment that requires the patient to see a physician, do lab tests, and obtain a prescription if indeed needed, while $T=0$ means no treatment. Let $Y$ be some measure of health, say blood pressure. In this scenario, where patients are self-selected into treatment (to a certain degree), we would expect that both potential outcomes would be greater for the treated over the control: $\Eb{Y(1)|T=1}>\Eb{Y(1)|T=0}$,  $\Eb{Y(0)|T=1}>\Eb{Y(0)|T=0}$. Since $\Eb{Y(1)} = \Eb{Y(1)|T=1}p(T=1) + \Eb{Y(1)|T=0}p(T=0)$ we also have that $\Eb{Y(1)} > \Eb{Y(1)|T=1}$, and $\Eb{Y(0)} > \Eb{Y(0)|T=0}$ unless $p(T=0) = 1$. Taken together, this shows that in the above scenario, we expect $\eta < 0$, if we haven't measured any $X$. This logic carries through in the plausible scenario where we have measured some $X$, but do not have access to all the variables $X$ that allows us to tell apart ``dutiful'' from ``negligent'' patients. To sum up, this example shows that in cases where some units are selected such as those more likely to be treated are those whose potential outcomes are higher (resp. lower) anyway, we can expect $\eta$ to be negative (resp. positive). 
\end{example-un}

\section{Method}

Given data from both the unconfounded and confounded studies, we propose the following recipe for removing the hidden confounding. First, we learn a function $\hat{\omega}$ over the observational sample $\{X_i\obs, T_i \obs, Y_i \obs\}_{i=1}^{n\obs}$.
This can be done using any CATE estimation method such as learning two regression functions for the treated and control and taking their difference, or specially constructed methods such as Causal Forest \citep{wager2017estimation}. Since we assume this sample has hidden confounding, $\omega$ is not equal to the true CATE and correspondingly $\hat{\omega}$ does not estimate the true CATE. We then learn a correction term which interpolates between $\hat{\omega}$ evaluated on the RCT samples $X\rct_i$, and the RCT outcomes $Y\rct_i$. This is a correction term for hidden confounding, which is our estimate of $\eta$. The correction term allows us to extrapolate $\tau$ over the confounded sample, using the identity $\tau(X) = \omega(X) + \eta(X)$. 

Note that we could not have gone the other way round: if we were to start with estimating $\tau$ over the unconfounded sample, and then estimate $\eta$ using the samples from the confounded study, we would end up constructing an estimate of $\omega(x)$, which is not the quantity of interest. Moreover, doing so would be difficult as the unconfounded sample is not expected to cover the confounded one. 

Specifically, the way we use the RCT samples relies on a simple identity. 
Let $e\rct(x)=\Prb{T=1\mid X=x,E\rct}$ be the propensity score on the unconfounded sample. If this sample is an RCT then typically $e\rct(x) = q$ for some constant, often $q=0.5$.

Let $q(X\rct_i) = \frac{T\rct_i}{e\rct(X\rct_i)}-\frac{1-T\rct_i}{1-e\rct(X\rct_i)}$ be a signed re-weighting function. 
We have:
\begin{lemma}\label{lem}
$$\Eb{q(X\rct_i)Y\rct_i | X\rct_i,E\rct} = \tau(X\rct_i).$$
\end{lemma}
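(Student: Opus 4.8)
The plan is to prove this by the standard inverse-propensity-weighting (Horvitz--Thompson) argument, the only twist being that we must pass between the "conditioned on $E\rct$" world and the unconditional CATE using \emph{both} parts of Assumption~\ref{asmp:unc}. First I would work throughout conditional on $X\rct_i=x$ and $E\rct$, and expand $q(X\rct_i)Y\rct_i$ by substituting the SUTVA identity $Y\rct_i = T\rct_i Y(1) + (1-T\rct_i)Y(0)$. Since $T\rct_i(1-T\rct_i)=0$, the cross terms drop and
$$
q(X\rct_i)Y\rct_i = \frac{T\rct_i}{e\rct(x)}Y(1) - \frac{1-T\rct_i}{1-e\rct(x)}Y(0).
$$

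Next I would take the conditional expectation term by term. For the first term, pull out the constant $1/e\rct(x)$ and apply Assumption~\ref{asmp:unc}(i), $Y(1)\indep T\mid X,E\rct$, to factor $\Eb{T\rct_i Y(1)\mid X=x,E\rct} = \Eb{T\rct_i\mid X=x,E\rct}\,\Eb{Y(1)\mid X=x,E\rct} = e\rct(x)\,\Eb{Y(1)\mid X=x,E\rct}$, so the $e\rct(x)$ cancels. The second term is symmetric, using $Y(0)\indep T\mid X,E\rct$ and $\Eb{1-T\rct_i\mid X=x,E\rct} = 1-e\rct(x)$. This yields
$$
\Eb{q(X\rct_i)Y\rct_i \mid X\rct_i=x,E\rct} = \Eb{Y(1)\mid X=x,E\rct} - \Eb{Y(0)\mid X=x,E\rct}.
$$
Finally, to match this with $\tau(x) = \Eb{Y(1)-Y(0)\mid X=x}$, which carries no conditioning on $E\rct$, I would invoke Assumption~\ref{asmp:unc}(ii), $Y(0),Y(1)\indep E\rct\mid X$, giving $\Eb{Y(t)\mid X=x,E\rct} = \Eb{Y(t)\mid X=x}$ for $t\in\{0,1\}$; subtracting gives exactly $\tau(x)$.

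I do not anticipate a genuine obstacle: the computation is routine once SUTVA and the two independence statements are in hand. The two points worth flagging are (a) implicit well-definedness — the reweighting $q$ requires $0<e\rct(x)<1$, i.e.\ overlap within the experimental support, which is precisely the regime $x\in\op{Support}(X\mid E\rct)$ in which the identity is used, and is immediate on an RCT where $e\rct\equiv q\in(0,1)$; and (b) that part (ii) of the assumption is exactly what licenses treating the experimentally identified contrast as the population CATE rather than an $E\rct$-specific quantity, the step that would fail if enrollment in the experiment changed potential outcomes.
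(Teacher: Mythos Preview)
Your proposal is correct and follows essentially the same inverse-propensity-weighting argument as the paper. The only cosmetic difference is that the paper first splits the conditional expectation over $T=1$ and $T=0$ via the law of total expectation and then invokes Assumption~\ref{asmp:unc}, whereas you first substitute $Y=T Y(1)+(1-T)Y(0)$ and then factor using the independence; your version is a bit more explicit about separating the roles of parts (i) and (ii) of Assumption~\ref{asmp:unc}, but the substance is identical.
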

What Lemma~\ref{lem} shows us is that $q(X\rct_i)Y\rct_i$ is an unbiased estimate of $\tau(X\rct_i)$. We now use this fact to learn $\eta$ as follows:
\begin{equation}\label{eq:thetahat}
\hat\theta=\argmin_\theta\sum_{i=1}^{n\rct}
\prns{
q(X\rct_i)Y\rct_i
-\hat\omega(X\rct_i)-\theta^\top X_i\rct
}^2
\end{equation}
Let
\begin{equation}\label{eq:tauhat}
\hat\tau(x)=\hat\omega(x)+\hat\theta^\top x .
\end{equation}

The method is summarized in Algorithm \ref{alg}.

\begin{algorithm}[t]
\caption{Remove hidden confounding with unconfounded sample}
\begin{algorithmic}[1]\label{alg}
\STATE \textbf{Input:} Unconfounded sample with propensity scores $D\rct = \{X_i\rct, T_i \rct, Y_i \rct,e\rct(X_i\rct)\}_{i=1}^{n\rct}$. Confounded sample $D\obs = \{X_i\obs, T_i \obs, Y_i \obs\}_{i=1}^{n\obs}$. Algorithm $\mathcal{Q}$ for fitting CATE.
  \STATE Run $\mathcal{Q}$ on $D\obs$, obtain CATE estimate $\hat{\omega}$.
  \STATE  Let $\hat{\theta}$ be the solution of the optimization problem in Equation \eqref{eq:thetahat}.
  \STATE Set function $\hat{\tau}(x):= \hat{\omega}(x) + \hat{\theta}^\top x$
 \STATE\textbf{Return:} $\hat{\tau}$, an estimate of CATE over $D\obs$.
\end{algorithmic}
\end{algorithm}

Let us contrast our approach with two existing ones. The first, is to simply learn the treatment effect function directly from the unconfounded data, and extrapolate it to the observational sample. This is guaranteed to be unconfounded, and with a large enough unconfounded sample the CATE function can be learned  \citep{crump2008nonparametric,pearl2015detecting}. This approach is presented for example by \citep{bareinboim2013general} for ATE, as the transport formula. However, extending this approach to CATE in our case is not as straightforward. The reason is that we assume that the confounded study does not fully overlap with the unconfounded study, which requires \emph{extrapolating} the estimated CATE function into a region of sample space outside the region where it was fit. This requires strong parametric assumptions about the CATE function. On the other hand, we do have samples from the target region, they are simply confounded. One way to view our approach is that we move the extrapolation a step back: instead of extrapolating the CATE function, we merely extrapolate a correction due to hidden confounding. In the case that the CATE function does actually extrapolate well, we do no harm - we learn $\eta \approx 0$. 

The second alternative relies on re-weighting the RCT population so as to make it similar to the target, observational population \citep{stuart2011use,hartman2015sate,andrews2017weighting}. These approaches suffer from two important drawbacks from our point of view: (i) they assume the observational study has no unmeasured confounders, which is often an unrealistic assumption; (ii) they assume that the support of the observational study is contained within the support of the experimental study, which again is unrealistic as the experimental studies are often smaller and on somewhat different populations. If we were to apply these approaches to our case, we would be re-weighting by the inverse of weights which are close to, or even identical to, $0$.

\section{Theoretical guarantee }

We prove that under conditions of parametric identification of $\eta$, Algorithm \ref{alg} recovers a consistent estimate of $\tau(x)$ over the $\E\obs$, at a rate which is governed by the rate of estimating $\omega$ by $\hat{\omega}$.
For the sake of clarity, we focus on a linear specification of $\eta$. Other parametric specifications can easily be accommodated given that the appropriate identification criteria hold (for linear this is the non-singularity of the design matrix).
Note that this result is strictly stronger than results about CATE identification which rely on ignorability: what enables the improvement is of course the presence of the unconfounded sample $E\rct$. Also note that this result is strictly stronger than the transport formula \citep{bareinboim2013general} and re-weighting such as \citep{andrews2017weighting}.

\begin{theorem}\label{thm}
Suppose
\begin{enumerate}
\item\label{consistentomega} $\hat\omega$ is a consistent estimator on the observational data (on which it's trained): $\Efb{(\hat\omega(X)-\omega(X))^2\mid E\obs}=O(r(n))$ for $r(n)=o(1)$
\item\label{obscoversrct} The covariates in the confounded data cover those in the unconfounded data (strong one-way overlap): $\exists\kappa>0 : 
\Prb{E\rct\mid X}\leq\kappa\Prb{E\obs\mid X}$
\item\label{lineareta} $\eta$ is linear: $\exists\theta_0:\eta(x)=\theta_0^\top x$
\item\label{inveertxx} Identifiability of $\theta_0$: $\Efb{XX^\top\mid E\rct}$ is non-singular
\item\label{finitemoments} $X$, $Y$, and $\hat\omega(X)$ have finite fourth moments in the experimental data:
$\Efb{\fmagd{X}_2^4\mid E\rct}<\infty$, $\Efb{Y^4\mid E\rct}<\infty$,
$\Efb{\hat\omega(X)^4\mid E\rct}<\infty$
\item\label{weaktreatmentoverlap} Strong overlap between treatments in unconfounded data: 
$\exists\nu>0:
\nu\leq e\rct(X)\leq1-\nu$
\end{enumerate}
Then $\hat\theta$ is consistent
$$
\fmagd{\hat\theta-\theta_0}^2_2=O_p(r(n)+1/n)
$$
and $\hat\tau$ is consistent on its target population
$$
((\hat\tau(X)-\tau(X))^2\mid E\obs)=O_p(r(n)+1/n)
$$

\end{theorem}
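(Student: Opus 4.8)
The plan is to treat the fit in \eqref{eq:thetahat} as a well-specified linear regression that has been perturbed by the estimation error of $\hat\omega$, and then to run a standard ordinary-least-squares perturbation analysis in which the ``misspecification'' from using $\hat\omega$ in place of $\omega$ contributes the $r(n)$ term and the finiteness of the RCT sample contributes the $1/n$ term. Write $Z_i=q(X_i\rct)Y_i\rct$, $\varepsilon_i=Z_i-\tau(X_i\rct)$, and $\Delta_i=\hat\omega(X_i\rct)-\omega(X_i\rct)$. Lemma~\ref{lem} gives $\Efb{\varepsilon_i\mid X_i\rct,E\rct}=0$, and the identity $\tau(x)=\omega(x)+\theta_0^\top x$ from assumption~\ref{lineareta} gives $Z_i-\hat\omega(X_i\rct)=\theta_0^\top X_i\rct+\varepsilon_i-\Delta_i$. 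Substituting into the normal equations of \eqref{eq:thetahat} yields the exact identity
\begin{equation*}
\hat\theta-\theta_0=\prns{\tfrac1{n\rct}\textstyle\sum_i X_i\rct(X_i\rct)^\top}^{-1}\prns{\tfrac1{n\rct}\textstyle\sum_i X_i\rct\varepsilon_i-\tfrac1{n\rct}\textstyle\sum_i X_i\rct\Delta_i}.
\end{equation*}

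I would then bound the three factors. The inverse sample second-moment matrix converges in probability to $\Efb{XX^\top\mid E\rct}^{-1}$ by the law of large numbers (finite second moments, assumption~\ref{finitemoments}) and non-singularity (assumption~\ref{inveertxx}), so it is $O_p(1)$. The noise term $\frac1{n\rct}\sum_i X_i\rct\varepsilon_i$ is a mean-zero average of i.i.d.\ terms by Lemma~\ref{lem}, and it has finite variance: $|q(\cdot)|\le1/\nu$ by assumption~\ref{weaktreatmentoverlap}, so $\Efb{\varepsilon^4\mid E\rct}\lesssim\nu^{-4}\Efb{Y^4\mid E\rct}+\Efb{\tau(X)^4\mid E\rct}<\infty$ (the last term also bounded by $\nu^{-4}\Efb{Y^4\mid E\rct}$ via Jensen applied to Lemma~\ref{lem}), and combined with $\Efb{\fmagd{X}_2^4\mid E\rct}<\infty$ this gives $\Efb{\fmagd{X\varepsilon}_2^2\mid E\rct}<\infty$ by Cauchy--Schwarz; hence $\frac1{n\rct}\sum_i X_i\rct\varepsilon_i=O_p(n^{-1/2})$.

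The crux is the $\hat\omega$-error term. By Cauchy--Schwarz, $\fmagd{\frac1{n\rct}\sum_i X_i\rct\Delta_i}_2\le\sqrt{\tfrac1{n\rct}\sum_i\fmagd{X_i\rct}_2^2}\cdot\sqrt{\tfrac1{n\rct}\sum_i\Delta_i^2}$, and the first radical is $O_p(1)$ (again by the law of large numbers and assumption~\ref{finitemoments}). The key observation is that $\hat\omega$ is trained on $D\obs$, which is independent of the RCT covariates, so conditionally on $D\obs$ the function $\hat\omega$ is fixed; assumption~\ref{obscoversrct} then yields the density-ratio bound $p(x\mid E\rct)/p(x\mid E\obs)\le\kappa\,\Prb{E\obs}/\Prb{E\rct}$, and integrating the nonnegative function $(\hat\omega(x)-\omega(x))^2$ against this bound transfers the $L^2$ error from the observational population to the experimental one: $\Efb{\Delta_i^2}\le\kappa\,\tfrac{\Prb{E\obs}}{\Prb{E\rct}}\,\Efb{(\hat\omega(X)-\omega(X))^2\mid E\obs}=O(r(n))$ by assumption~\ref{consistentomega}. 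Markov's inequality gives $\frac1{n\rct}\sum_i\Delta_i^2=O_p(r(n))$, hence $\frac1{n\rct}\sum_i X_i\rct\Delta_i=O_p(\sqrt{r(n)})$. Combining the three bounds gives $\fmagd{\hat\theta-\theta_0}_2=O_p(n^{-1/2}+\sqrt{r(n)})$, i.e.\ $\fmagd{\hat\theta-\theta_0}_2^2=O_p(r(n)+1/n)$.

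For the CATE guarantee I would decompose $\hat\tau(x)-\tau(x)=(\hat\omega(x)-\omega(x))+(\hat\theta-\theta_0)^\top x$, so that $(\hat\tau(X)-\tau(X))^2\le2(\hat\omega(X)-\omega(X))^2+2\fmagd{\hat\theta-\theta_0}_2^2\fmagd{X}_2^2$; over $X\sim(X\mid E\obs)$ the first term is $O_p(r(n))$ by assumption~\ref{consistentomega} and Markov, and the second is $O_p(r(n)+1/n)$ by the bound just established (using $\Efb{\fmagd{X}_2^2\mid E\obs}<\infty$, or tightness of $\fmagd{X}_2$ for the pointwise version). I expect the main obstacle to be exactly the $\hat\omega$-error term: obtaining the correct $r(n)$ rate there requires both the independence of $\hat\omega$ from the RCT sample and a careful use of the one-sided overlap assumption~\ref{obscoversrct} to move an $L^2$ bound across populations; everything else is moment bookkeeping and laws of large numbers.
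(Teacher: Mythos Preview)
Your proposal is correct and follows essentially the same route as the paper's proof: both write the OLS normal-equation identity $\hat\theta-\theta_0=(\tfrac1{n\rct}\sum_i X_i\rct(X_i\rct)^\top)^{-1}(\tfrac1{n\rct}\sum_i X_i\rct\varepsilon_i-\tfrac1{n\rct}\sum_i X_i\rct\Delta_i)$, control the inverse Gram matrix via assumptions~\ref{inveertxx} and~\ref{finitemoments}, handle the mean-zero noise term $\varepsilon_i$ with moment bounds (assumptions~\ref{finitemoments},~\ref{weaktreatmentoverlap}) to get $O_p(n^{-1/2})$, and bound the $\hat\omega$-error term by Cauchy--Schwarz together with assumptions~\ref{consistentomega} and~\ref{obscoversrct} to get $O_p(\sqrt{r(n)})$. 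Your treatment is in fact slightly more explicit than the paper's in two places: you spell out the density-ratio transfer $p(x\mid E\rct)/p(x\mid E\obs)\le\kappa\,\Prb{E\obs}/\Prb{E\rct}$ that underlies the use of assumption~\ref{obscoversrct}, and you note the conditioning on $D\obs$ (independence of $\hat\omega$ from the RCT sample) needed to make the argument clean.
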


There are a few things to note about the result and its conditions. First, we note that if the so-called confounded observational sample is in fact unconfounded then we immediately get that the linear specification of $\eta$ is correct with $\theta_0=0$ because we simply have $\eta(x)=0$. Therefore, our conditions are strictly weaker than imposing unconfoundedness on the observational data. 

Condition 1 requires that our base method for learning $\omega$ is consistent just as a regression method. There are a few ways to guarantee this. For example, if we fit $\hat\omega$ by empirical risk minimization on weighted outcomes over a function class of finite capacity (such as a VC class) or if we fit as the difference of two regression functions each fit by empirical risk minimization on observed outcomes in each treatment group, then standard results in statistical learning \citep{bartlett2002rademacher} ensure the consistency of L2 risk and therefore the L2 convergence required in condition 1.
Alternatively, any method for learning CATE that would have been consistent for CATE under unconfoundedness would actually still be consistent for $\omega$ if applied. Therefore we can also rely on such base method as causal forests \cite{wager2017estimation} and other methods that target CATE as inputs to our method, even if they don't actually learn CATE here due to confounding.

Condition 2 captures our understanding of the observational dataset having a larger scope than the experimental dataset. The condition essentially requires a strong form of absolute continuity between the two covariate distributions. This condition could potentially be relaxed so long as there is enough intersection where we can learn $\eta$. So for example, if there is a subset of the experiment that the observational data covers, that would be sufficient so long as we can also ensure that condition 4 still remains valid on that subset so that we can learn the sufficient parameters for $\eta$.

Condition 3, the linear specification of $\eta$, can be replaced with another one so long as it has finitely many parameters and they can be identified on the experimental dataset, i.e., condition 4 above would change appropriately. 

Since unconfoundedness implies $\eta=0$, whenever the parametric specification of $\eta$ contains the zero function (e.g., as in the linear case above since $\theta_0=0$ is allowed) condition 3 is \emph{strictly weaker} than assuming unconfoundedness. In that sense, our method can consistently estimate CATE on a population where no experimental data exists under weaker conditions than existing methods, which assume the observational data is unconfounded.

Condition 5 is trivially satisfied whenever outcomes and covariates are bounded. Similarly, we would expect that if the first two parts of condition 5 hold (about $X$ and $Y$) then the last one about $\hat\omega$ would also hold as it is predicting outcomes $Y$. That is, the last part of condition 5 is essentially a requirement on our $\hat\omega$-leaner base method that it's not doing anything strange like \emph{adding} unnecessary noise to $Y$ thereby making it have fewer moments. For all base methods that we consider, this would come for free because they are only averaging outcomes $Y$. We also note that if we impose the existence of even higher moments as well as pointwise asymptotic normality of $\hat\omega$, one can easily transform the result to an asymptotic normality result. Standard error estimates will in turn require a variance estimate of $\hat\omega$.

Finally, we note that condition 6, which requires strong overlap, only needs to hold in the \emph{unconfounded} sample.
This is important as it would be a rather strong requirement in the confounded sample where treatment choices may depend on high dimensional variables \citep{d2017overlap}, but it is a weak condition for the experimental data.
Specifically, if the unconfounded sample arose from an RCT then propensities would be constant and the condition would hold trivially.

\section{Experiments}

In order to illustrate the validity and usefulness of our proposed method we conduct simulation experiments and experiments with real-world data taken from the Tennessee STAR study: a large long-term school study where students were randomized to different types of classes \citep{word1990state,krueger1999experimental}.

\subsection{Simulation study}

We generate data simulating a situation where there exists an un-confounded dataset and a confounded dataset, with only partial overlap.  Let $X \in \R$ be a measured covariate, $T \in \{0,1\}$ binary treatment assignment, $U \in \R$ an unmeasured confounder, and $Y \in \R$ the outcome. We are interested in $\tau(X)$.

We generate the unconfounded sample as follows: $ X\rct \sim \text{Uniform}\left[-1,1\right]$, $U\rct \sim \mathcal{N}(0,1)$, $T\rct \sim \text{Bernoulli}(0.5)$.
We generate the confounded sample as follows: we first sample $T\obs \sim \text{Bernoulli}(0.5)$ and then sample $X\obs,U\obs$ from a bivariate Gaussian
\[(X\obs,U\obs)|T\obs\sim \mathcal{N}\left([0,0],\begin{bmatrix} 1 & T\obs-0.5\\ T\obs-0.5 & 1 \end{bmatrix}\right).\]
This means that $X\obs,U\obs$ come from a Gaussian mixture model where $T\obs$ denotes the mixture component and the components have equal means but different covariance structures. This also implies that $\eta$ is linear.

For both datasets we set $Y =  1 + T+ X + 2\cdot T \cdot X + 0.5 X^2 + 0.75 \cdot T \cdot X^2 + U + 0.5 \epsilon$, where $\epsilon \sim \mathcal{N}(0,1)$. The true CATE is therefore $\tau(X) = 0.75 X^2 + 2X +1$. We have that the true $\omega = \tau + \E[U|X,T=1] - \E[U|X,T=0]$, which leads to the true $\eta = x$. We then apply our method (with a CF base) to learn $\eta$. We plot (See Figure~\ref{fig:synth}) here the true and recovered $\eta$ with our method. Even with the limited un-confounded set (between $-1,1$) making the full scope of the $x^2$ term in $Y$ inaccessible, we are able to reasonably estimate $\tau$. Other methods would suffer under the strong unobserved confounding.

\begin{figure}[!t]
\begin{center}
\centerline{\includegraphics[width=0.8\columnwidth]{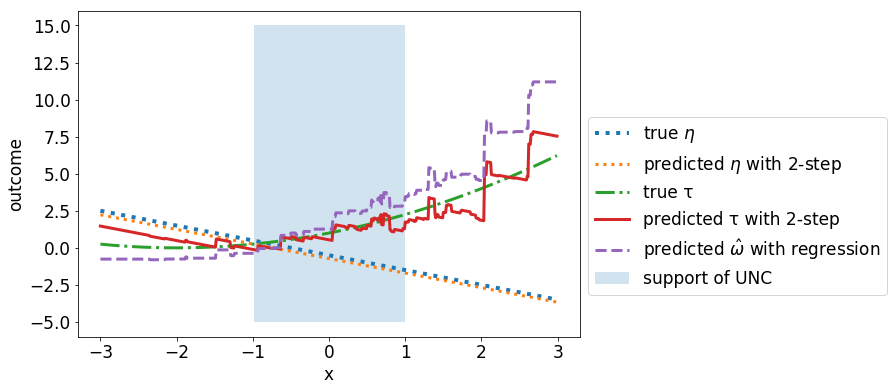}}
\caption{True  and predicted $\tau$ and $\eta$ for an unconfounded sample of limited overlap with the confounded sample: unconfounded samples are limited to $[-1,1]$ (blue shaded region); confounded samples lie in $-[3,3]$; predicted $\hat{w}$ is from difference of regressions on treated and control $y$.}
\label{fig:synth}
\end{center}
\vspace{-30pt}
\end{figure}
\subsection{Real-world data}
 
Validating causal-inference methods is hard because we almost never have access to true counterfactuals. We approach this challenge by using data from a randomized controlled trial, the Tennessee STAR study \citep{word1990state,krueger1999experimental,mcfowland2018efficient}. When using an RCT, we have access to unbiased CATE-estimates because we are guaranteed unconfoundedness. We then artificially introduce confounding by selectively removing a biased subset of samples.

\textbf{The data:} The Tennessee Student/Teacher Achievement Ratio (STAR) experiment is a 
randomized experiment started in 1985 to measure the effect of class size
on student outcomes, measured by standardized test scores. The experiment started monitoring
students in kindergarten and followed students until third grade.
Students and teachers were randomly assigned into conditions during the first school year, with the intention
for students to continue in their class-size condition for the entirety of the experiment. We focus on two of the experiment conditions:
small classes(13-17 pupils), and regular classes(22-25 pupils). Since many students only started the study at first grade, we took as treatment their class-type at first grade. Overall we have 4509 students with treatment assignment at first grade. The outcome $Y$ is the sum of the listening, reading, and math standardized test at the end of first grade. After removing students with missing outcomes \footnote{The correlation between missing outcome and treatment assignment is $R^2 <10^{-4}$.}, we remain with a randomized sample of 4218 students: 1805 assigned to treatment (small class, $T=1$), and 2413 to control (regular size class, $T=0$).
In addition to treatment and outcome, we used the following covariates for each student: gender, race, birth month, birthday,  birth year, free lunch given or not, teacher id. Our goal is to compute the CATE conditioned on this set of covariates, jointly denoted $X$. 

\textbf{Computing ground-truth CATE:} The STAR RCT allows us to obtain an unbiased estimate of the CATE. Specifically, we use the identity in Eq. \eqref{eq:yc}, and the fact that in the study, the propensity scores $e(X_i)$ were constant. We define a ground-truth sample $\{(X_i, \Yc ) \}_{i=1}^{n}$, where $\Yc = \frac{Y_i}{q+T_i-1}$, $q=p(T =1)$. By Eq. \eqref{eq:yc} we know that $\Eb{\Yc|X_i} = \tau(X_i)$ within the STAR study.

\textbf{Introducing hidden confounding:} Now that we have the ground-truth CATE, we wish to emulate the scenario which motivates our work. We split the entire dataset (ALL) into a small unconfounded subset (UNC), and a larger, confounded subset (CONF) over a somewhat different population. We do this by splitting the population over a variable which is known to be a strong determinant of outcome \citep{krueger1999experimental}: rural or inner-city (2811 students) vs. urban or suburban (1407 students).

We generate UNC by randomly sampling a fraction $q'$ of the rural or inner-city students, where $q'$ ranges from $0.1$ to $0.5$. Over this sample, we know that treatment assignment was at random.

When generating CONF, we wish to obtain two goals: (a) the support of CONF should have only a partial overlap with the support of UNC, and (b) treatment assignment should be confounded, i.e. the treated and control populations should be systematically different in their potential outcomes. 
In order to achieve these goals, we generate CONF as follows: From the rural or inner-city students, we take the controls ($T=0$) that were not sampled in UNC, and \emph{only} the treated ($T=1$) whose outcomes were in the lower half of outcomes among treated rural or inner-city students. From the urban or suburban students, we take all of the controls, and \emph{only} the treated whose outcomes were in the lower half of outcomes among treated urban or suburban students. 

This procedure results in UNC and CONF populations which do not fully overlap: UNC has only rural or inner-city students, while CONF has a substantial subset (roughly one half for $q'=0.5$) of urban and suburban students. It also creates confounding, by removing the students with the higher scores selectively from the treated population. This biases the naive treatment effect estimates downward. We further complicate matters by dropping the covariate indicating rural, inner-city, urban or suburban from all subsequent analysis. Therefore, we have significant unmeasured confounding in the CONF population,  and also the unconfounded ground-truth in the original, ALL population.

\textbf{Metric:} In our experiments, we assume we have access to samples from UNC and CONF. We use either UNC, CONF or both to fit various models for predicting CATE. We then evaluate how well the CATE predictions match $\Yc$ on a held-out sample from $\text{ALL} \setminus \text{UNC}$ (the set ALL minus the set UNC), in terms of RMSE. Note that we are \emph{not} evaluating on CONF, but on the unconfounded version of CONF, which is exactly $\text{ALL} \setminus \text{UNC}$. The reason we don't evaluate on ALL is twofold: First, it will only make the task easier because of the nature of the UNC set; second, we are motivated by the scenario where we have a confounded observational study representing the target population of interest, and wish to be aided by a separate unconfounded study (typically an RCT) available for a different population. We focus on a held-out set in order to avoid giving too much of an advantage to methods which can simply fit the observed outcomes well.

\begin{figure}[!t]
\begin{center}
\centerline{\includegraphics[width=0.8\columnwidth]{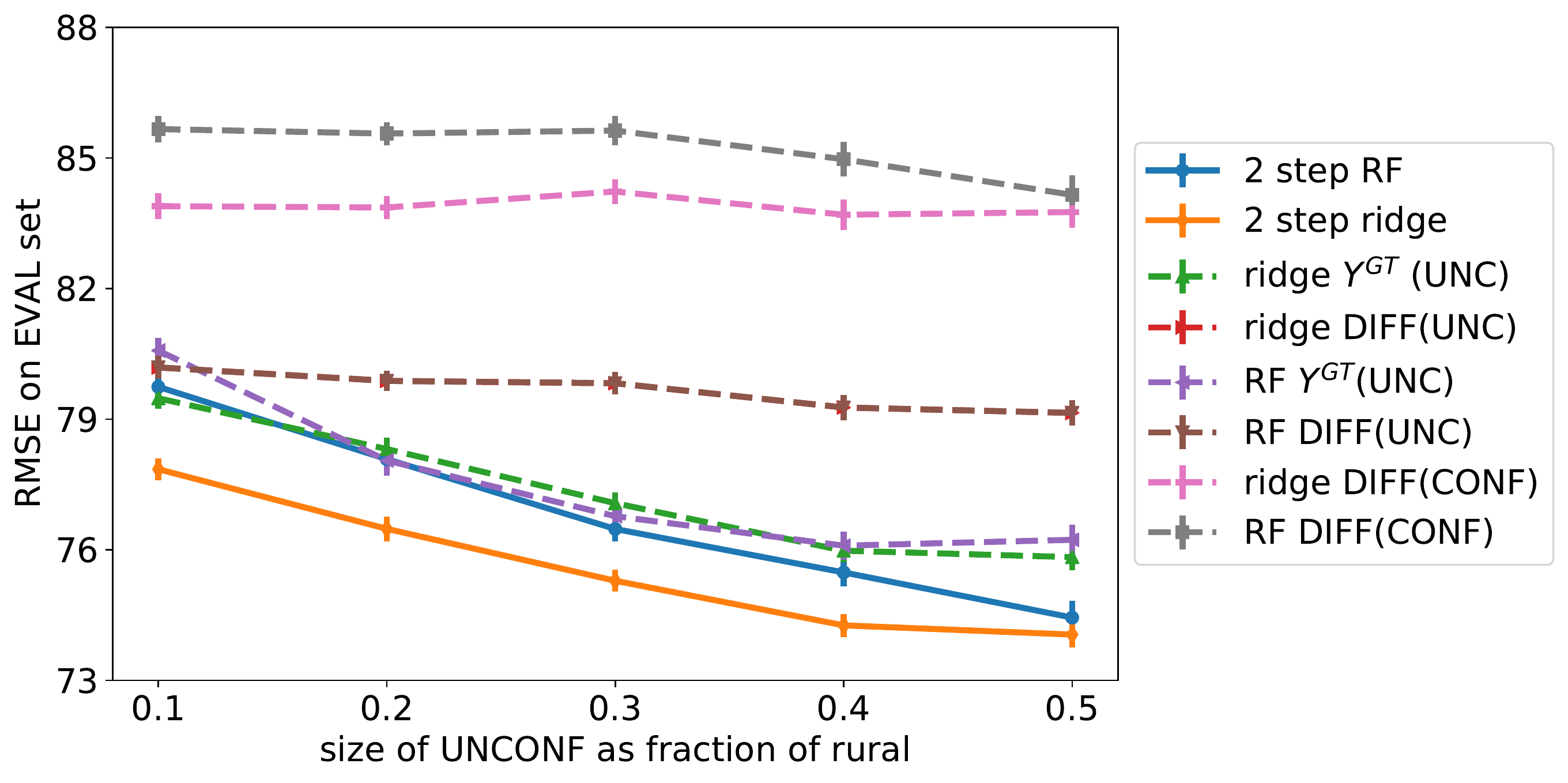}}
\caption{RMSE of estimating $\Yc$ on a held-out evaluation subset of $\text{ALL}\setminus \text{UNC}$, for varying sizes of the unconfounded subset. \emph{RF} and \emph{ridge} stand for Random Forest and Ridge Regression, respectively. 
{\bf{2 step}} is our method. {\bf{[RF/ridge] $\bf{\Yc}$}} is regression directly on $\Yc$. {\bf{[RF/ridge] DIFF}} is the difference between the predictions of models fit on treated and control separately. {\bf{UNC}} or {\bf{CONF}} in parentheses indicates which subset of the data was used for regression.}
\label{fig:rmse}
\end{center}
\vspace{-20pt}
\end{figure}

\textbf{Baselines:} As a baseline we fit CATE using standard methods on either the UNC set or the CONF set. Fitting on the UNC set is essentially a CATE version of applying the transport formula \citep{bareinboim2014external}. Fitting on the CONF set amounts to assuming ignorability (which is wrong in this case), and using standard methods. The methods we use to estimate CATE are: (i) Regression method fit on $\Yc$ over UNC (ii) Regression method fit separately on treated and control in CONF (iii) Regression method fit separately on treated and control in UNC. The regression methods we use in (i)-(iii) are Random Forest with 200 trees and Ridge Regression with cross-validation. In baselines (ii) and (iii), the CATE is estimated as the difference between the prediction of the model fit on the treated and the prediction of the model fit on the control.
We also experimented extensively with Causal Forest \citep{wager2017estimation}, but found it to uniformly perform worse than the other methods, even when given unfair advantages such as access to the entire dataset (ALL). 

\textbf{Results:} Our two-step method requires a method for fitting $\hat{\omega}$ on the confounded dataset. We experiment with two methods, which parallel those used as baseline: A regression method fit separately on treated and control in CONF, where we use either Random Forest with 200 trees or Ridge Regression with cross-validation as regression methods. We see that our methods, \emph{2-step RF} and \emph{2-step ridge}, consistently produce more accurate estimates than the baselines. 
We see that our methods in particular are able to make use of larger unconfounded sets to produce better estimates of the CATE function.See Figure~\ref{fig:rmse} for the performance of our method vs. the various baselines.

\section{Discussion}

In this paper we address a scenario that is becoming more and more common: users with large observational datasets who wish to extract causal insights using their data and help from unconfounded experiments on different populations. One direction for future work is combining the current work with work that looks explicitly into the causal graph connecting the covariates, including unmeasured ones \citep{triantafillou2015constraint,mooij2016joint}. Another direction includes cases where the outcomes or interventions are not directly comparable, but where the difference can be modeled. For example, experimental studies often only study short-term outcomes, whereas the observational study might track long-term outcomes which are of more interest \citep{athey2016estimating}. 

\section*{Acknowledgements} 
We wish to thank the anonymous reviewers for their helpful suggestions and comments. (NK) This material is based upon work supported by the National Science Foundation under Grant No. 1656996.


\bibliographystyle{alpha}
\bibliography{sample}

\clearpage
\appendix

\section{Proofs}

\begin{proof}[Proof of Lemma ~\ref{lem}]
We wish to prove
$\Eb{q(X\rct_i)Y\rct_i | X\rct_i,E\rct} = \tau(X\rct_i).$
We have that:
\begin{align}\label{eq:yc}
\Eb{q(X\rct_i)Y\rct_i | X\rct_i,E\rct} =  &\Eb{q(X\rct_i)Y\rct_i | X\rct_i,E\rct, T=1}e\rct(X\rct_i) \\ & \quad +  \Eb{q(X\rct_i)Y\rct_i | X\rct_i,E\rct, T=0}\prns{1-e\rct(X\rct_i)} \\
 = &\Eb{Y\rct_i | X\rct_i,E\rct, T=1} - \Eb{Y\rct_i | X\rct_i,E\rct, T=0}   \\
\stackrel{(a)}{=} &\Eb{Y(1)|X\rct_i} - \Eb{Y(0)|X\rct_i} = \tau(X\rct_i), \nonumber
\end{align}
where equality (a) is by Assumption \ref{asmp:unc} and the definition of $Y$.

\end{proof}

\begin{proof}[Proof of Thm.~\ref{thm}]

Let $\mb X\rct=(X_1\rct,\dots,X_{n\rct}\rct)$ be the design matrix in the experimental data and let $u_i\rct=\prns{\frac{T\rct_i}{e\rct(X\rct_i)}-\frac{1-T\rct_i}{1-e\rct(X\rct_i)}}Y\rct_i-\hat\omega(X\rct_i)$ be the regression outcome and $\mb u=(u_1\rct,\dots,u_{n\rct}\rct)$ so that $\hat\theta=(\mb X\rct (\mb X\rct)^\top)^{-1}(\mb X\rct)^\top\mb u\rct$.
Let $a_i\rct=\omega(X_i\rct)-\hat\omega(X_i\rct)$ and $b_i\rct=\prns{\frac{T\rct_i}{e\rct(X\rct_i)}-\frac{1-T\rct_i}{1-e\rct(X\rct_i)}}Y\rct_i-\tau(X_i\rct)$. Note that $u_i\rct=\tau(X_i\rct)-\omega(X_i\rct)+b_i\rct-a_i\rct$, which by condition \ref{lineareta} we can write as $u_i\rct=\theta^\top X_i\rct+b_i\rct-a_i\rct$. Hence, we have
\begin{align*}
\hat\theta-\theta=&~(\mb X\rct (\mb X\rct)^\top)^{-1}(\mb X\rct)^\top\mb u-\theta\\
=&~((\frac1{n\rct}\mb X\rct (\mb X\rct)^\top)^{-1}(\frac1{n\rct}(\mb X\rct)^\top\mb X\rct)-I)\theta
\\&~-
(\frac1{n\rct}\mb X\rct (\mb X\rct)^\top)^{-1}(\frac1{n\rct}(\mb X\rct)^\top\mb a\rct)
\\&~+
(\frac1{n\rct}\mb X\rct (\mb X\rct)^\top)^{-1}(\frac1{n\rct}(\mb X\rct)^\top\mb b\rct)
\end{align*}
Let $M=\Efb{XX^\top\mid E\rct}$.
By condition \ref{finitemoments}, we have that 
$\magd{\frac1{n\rct}\mb X\rct (\mb X\rct)^\top-M}_F^2
=O_p(1/n)$.
By condition \ref{inveertxx}, $\magd{(\frac1{n\rct}\mb X\rct (\mb X\rct)^\top)^{-1}(\frac1{n\rct}(\mb X\rct)^\top\mb X)-I}_F^2=O_p(1/n)$.

Next, consider the second term:
\begin{align*}
\frac1{n\rct}(\mb X\rct)^\top \mb a\rct=\frac1{n\rct}\sum_{i=1}^n(\omega(X_i\rct)-\hat\omega(X_i\rct)) X_i\rct
\end{align*}
By Cauchy-Schwartz and condition \ref{finitemoments},
\begin{align*}
\Efb{\fmagd{(\omega(X_i\rct)-\hat\omega(X_i\rct)) X_i\rct}_2^2}^2\leq
\Efb{(\omega(X_i\rct)-\hat\omega(X_i\rct))^4} \Efb{\fmagd{X_i\rct}_2^4}<\infty.
\end{align*}
And again by Cauchy-Schwartz,
\begin{align*}
\fmagd{\Efb{(\omega(X_i\rct)-\hat\omega(X_i\rct)) X_i\rct}}_2^2\leq
\Efb{(\omega(X)-\hat\omega(X))^2\mid E\rct}
\Efb{\fmagd{X}_2^2\mid E\rct}.
\end{align*}
Conditions \ref{consistentomega} and \ref{obscoversrct} give that ${\Efb{(\hat\omega(X)-\omega(X))^2\mid E\rct}}=O(r(n))$. Hence, by above finiteness of second moment,
$$
\fmagd{(\frac1{n\rct}\mb X\rct (\mb X\rct)^\top)^{-1}(\frac1{n\rct}(\mb X\rct)^\top\mb a\rct)}_2^2=O_p(r(n)+1/n).
$$

Finally, consider the third term:
\begin{align*}
\frac1{n\rct}(\mb X\rct)^\top\mb b\rct=\frac1{n\rct}\sum_{i=1}^nb_i\rct X_i\rct
\end{align*}
First note that $\Efb{b_i\rct\mid X_i\rct}=0$ by the outcome weighting formula and hence $\Efb{b_i\rct X_i\rct}=0$. By Cauchy-Schwartz and conditions \ref{finitemoments} and \ref{weaktreatmentoverlap}, we have
$$
\Efb{\fmagd{b_i\rct X_i\rct}_2^2}^2\leq
\Efb{(b_i\rct)^4}
\Efb{\fmagd{X_i\rct}_2^4}<\infty
$$
and hence
$$
\fmagd{(\frac1{n\rct}\mb X\rct (\mb X\rct)^\top)^{-1}(\frac1{n\rct}(\mb X\rct)^\top\mb b\rct)}_2^2=O_p(1/n).
$$
We conclude that $\fmagd{\hat\theta-\theta_0}_2^2=O_p(r(n)+1/n)$.
Since condition \ref{consistentomega} implies that 
$((\hat\omega(X)-\omega(X))^2\mid E\obs)=O_p(r(n))$
by Markov's theorem, we also conclude that
$
((\hat\tau(X)-\tau(X))^2\mid E\obs)=O_p(r(n)+1/n).
$
\end{proof}

\end{document}